\newcommand{\shorturl}[1]{\href{https://#1}{\nolinkurl{#1}}}
\setlist[enumerate]{itemsep=2pt,topsep=0pt,leftmargin=*}
\titlespacing*{\subsection}{0pt}{*1}{0pt}
\newtheoremstyle{custom_theorem}  
    {10pt}  
    {0pt}  
    {\itshape}  
    {}  
    {\bfseries}  
    {\ }  
    { }  
    {}  
\theoremstyle{custom_theorem}
\newtheorem{proposition}{Proposition}
\newtheorem{example}{Example}
\newtheorem{restated_proposition}{Proposition}
\renewenvironment{proof}[1][\proofname]  
{%
    \par  
    \pushQED{\qed}  
    \normalfont  
    \topsep0\p@\@plus2\p@\relax  
    \trivlist  
    \item[\hskip\labelsep\itshape#1\ ]  
    \ignorespaces  
}
{%
    \popQED  
    \endtrivlist  
    \@endpefalse  
}
\newcommand{\crossentropy}[2]{\mathrm{H} \! \left[ #1 \, \| \, #2 \right]}
\newcommand{\entropy}[1]{\mathrm{H} \! \left[ #1 \right]}
\newcommand{\expectation}[2]{\mathbb{E}_{#1} \! \left[ #2 \right]}
\newcommand{\variance}[2]{\mathbb{V}_{#1} \! \left[ #2 \right]}
\newcommand{\kldivergence}[2]{\mathrm{KL} \! \left[ #1 \, \| \, #2 \right]}
\newcommand{\uncertainty}[1]{h \! \left[ #1 \right]}
\newcommand{\train}[0]{\mathrm{train}}
\newcommand{\eval}[0]{\mathrm{eval}}
\newcommand{\true}[0]{\mathrm{true}}
\newcommand{\ynew}[0]{y^{\scriptscriptstyle +}}
\begin{document}

\raggedbottom

\twocolumn[
    \icmltitle{Rethinking Aleatoric and Epistemic Uncertainty}

    \begin{icmlauthorlist}
        \icmlauthor{Freddie Bickford Smith}{oxford}
        \icmlauthor{Jannik Kossen}{oxford}
        \icmlauthor{Eleanor Trollope}{oxford}
        \\
        \icmlauthor{Mark van der Wilk}{oxford}
        \icmlauthor{Adam Foster}{oxford}
        \icmlauthor{Tom Rainforth}{oxford}
    \end{icmlauthorlist}

    \icmlaffiliation{oxford}{University of Oxford}

    \icmlcorrespondingauthor{Freddie Bickford Smith}{\texttt{fbickfordsmith@cs.ox.ac.uk}}

    \icmlkeywords{aleatoric, epistemic, reducible, irreducible, predictive, uncertainty, decomposition}

    \vspace{0.3in}
]

\printAffiliationsAndNotice{}

\begin{abstract}
    The ideas of aleatoric and epistemic uncertainty are widely used to reason about the probabilistic predictions of machine-learning models.
    We identify incoherence in existing discussions of these ideas and suggest this stems from the aleatoric-epistemic view being insufficiently expressive to capture all the distinct quantities that researchers are interested in.
    To address this we present a decision-theoretic perspective that relates rigorous notions of uncertainty, predictive performance and statistical dispersion in data.
    This serves to support clearer thinking as the field moves forward.
    Additionally we provide insights into popular information-theoretic quantities, showing they can be poor estimators of what they are often purported to measure, while also explaining how they can still be useful in guiding data acquisition.
\end{abstract}
\section{Introduction}\label{sec:introduction}

\begin{figure*}[t]
    \centering
    \begin{adjustbox}{width=0.89\linewidth,keepaspectratio}
        \begin{tikzpicture}[
                boxstyle/.style={
                        draw=black!20,
                        rectangle,
                        rounded corners,
                        align=left,
                        inner sep=8pt,
                    },
                font=\fontfamily{lmss}\selectfont,
        ]
            \node[boxstyle, text width=88pt] (task) {
                \textbf{Decision problem}
                \\ \vspace{3pt}
                $\,\ell: \mathcal{A} \times \mathcal{Z} \to \mathbb{R}$
            };

            \node[boxstyle, text width=88pt, below=10pt of task.south west, anchor=north west] (data) {
                \textbf{Training data}
                \\ \vspace{3pt}
                $\,y_{1:n} \sim p_\train(y_{1:n}|\pi)$
            };

            \node[boxstyle, text width=80pt, right=20pt of task.north east, anchor=north west] (model) {
                \textbf{Predictive model}
                \\ \vspace{3pt}
                $\,p_n(z) = p(z;y_{1:n})$
            };

            \node[boxstyle, text width=136pt, right=20pt of model.north east, anchor=north west] (action) {
                \textbf{Bayes-optimal action}
                \\ \vspace{3pt}
                $\,a_n^* = \arg\min\nolimits_{a\in\mathcal{A}} \expectation{p_n(z)}{\ell(a,z)}$
            };

            \node[boxstyle, text width=115pt, below=35pt of action.south west, anchor=north west, xshift=8pt] (predictive_uncertainty) {
                Predictive uncertainty
                \\ \vspace{3pt}
                $\,\uncertainty{p_n(z)}= \mathbb{E}_{p_n(z)}[\ell(a_n^*,z)]$
            };

            \node[boxstyle, text width=235pt, right=8pt of predictive_uncertainty.north east, anchor=north west] (expected_uncertainty_reduction) {
                Expected uncertainty reduction
                \vspace{-6pt}
                \begin{flalign*}
                    \,\mathrm{EUR}_z^\true(\pi,m)
                    &= \uncertainty{p_n(z)} - \expectation{p_\train(\ynew_{1:m}|\pi)}{\uncertainty{p_{n+m}(z)}}
                    && \\
                    &\approx  \uncertainty{p_n(z)} - \expectation{p_n(\ynew_{1:m}|\pi')}{\uncertainty{q_{n+m}(z)}} &&
                \end{flalign*}
            };

            \coordinate (uncertainty_topleft) at ($(predictive_uncertainty.north west)+(-8pt,25pt)$);
            \coordinate (uncertainty_bottomright) at ($(expected_uncertainty_reduction.south east)+(8pt,-8pt)$);
            \draw[draw=black!20, rounded corners] (uncertainty_topleft) rectangle (uncertainty_bottomright);

            \node[below=5pt of uncertainty_topleft, anchor=north west, xshift=5pt] (uncertainty_title) {
                \textbf{Uncertainty quantification}
            };

            \node[boxstyle, text width=83pt, below=43pt of expected_uncertainty_reduction.south west, anchor=north west, xshift=-139pt] (proper_scoring_rule) {
                Proper scoring rule
                \\ \vspace{3pt}
                $\,s(p_n,z) = \ell(a^*_n, z)$
            };

            \node[boxstyle, text width=190pt, right=8pt of proper_scoring_rule.north east, anchor=north west] (discrepancy_function) {
                Discrepancy function
                \\ \vspace{3pt}
                $\,d(p_n,p_\eval) = \expectation{p_\eval(z)}{s(p_n,z)} - \uncertainty{p_\eval(z)}$
            };

            \coordinate (evaluation_topleft) at ($(proper_scoring_rule.north west)+(-8pt,25pt)$);
            \coordinate (evaluation_bottomright) at ($(discrepancy_function.south east)+(8pt,-8pt)$);
            \draw[draw=black!20, rounded corners] (evaluation_topleft) rectangle (evaluation_bottomright);

            \node[below=5pt of evaluation_topleft, anchor=north west, xshift=5pt] (evaluation_title) {
                \textbf{Model evaluation}
            };

            \draw[->, draw=gray, line width=0.5pt, bend left=30] ([xshift=-15pt,yshift=7pt]task.north east) to node[midway, above, yshift=0pt, text=gray] {Learning} ([xshift=15pt,yshift=7pt]model.north west);
            \draw[->, draw=gray, line width=0.5pt, bend left=30] ([xshift=-15pt,yshift=7pt]model.north east) to node[midway, above, yshift=0pt, text=gray] {Reasoning} ([xshift=15pt,yshift=7pt]action.north west);

        \end{tikzpicture}
    \end{adjustbox}
    \caption{
        Our decision-theoretic view coherently relates machine-learning concepts that have been conflated under the aleatoric-epistemic view.
        We consider taking an action, $a \in \mathcal{A}$, in light of imperfect knowledge of $z \in \mathcal{Z}$, with an action's consequences measured by a loss function, $\ell(a,z)$.
        Since $z$ is unknown, we use any available training data, $y_{1:n}$, to build a predictive model, $p_n(z)$, with which we can reason over possible values of $z$ and thus choose an action.
        Additionally we can use the model to quantify uncertainty and its reducibility with respect to new data, and we can evaluate the model using a ground-truth realisation of $z$ or a reference distribution, $p_\eval(z)$.
    }
    \vspace{-7pt}
    \label{fig:decision_theoretic_view}
\end{figure*}

When making decisions under uncertainty, it can be useful to reason about where that uncertainty comes from \citep{osband2023epistemic,wen2022from}.
Researchers often aim to do this by referring to aleatoric (literal meaning: ``relating to chance'') and epistemic (``relating to knowledge'') uncertainty, ideas with a long history in the study of probability \citep{hacking1975emergence}.
Aleatoric uncertainty is typically associated with statistical dispersion in data (sometimes thought of as noise), while epistemic is associated with a model's internal information state \citep{hullermeier2021aleatoric}.

Concerningly given their scale of use, these ideas are not being discussed coherently in the literature.
The line between model-based predictions and data-generating processes is repeatedly blurred \citep{amini2020deep,ayhan2018test,immer2021scalable,kapoor2022uncertainty,smith2018understanding,vanamersfoort2020uncertainty}.
On top of this, tenuous assumptions are made about how uncertainty will decompose on unseen data \citep{seebock2019exploiting,wang2021bayesian}, and misleading connections are drawn between uncertainty and predictive accuracy \citep{orlando2019u2net,wang2019aleatoric}.
Meanwhile distinct mathematical quantities are used to refer to notionally the same concepts: epistemic uncertainty, for example, has been variously defined using density-based \citep{mukhoti2023deep,postels2020quantifying}, information-based \citep{gal2017deep} and variance-based \citep{gal2016uncertainty,kendall2017what,mcallister2016bayesian} quantities.

We suggest this incoherence arises from the aleatoric-epistemic view being too simplistic in the context of machine learning.
Researchers are looking for concrete notions of a model's predictive uncertainty and how that uncertainty might or might not change with more data (associated with a decomposition into irreducible and reducible components), but also related notions of predictive performance and data dispersion.
The aleatoric-epistemic view cannot satisfy all these needs: many concepts stand to be defined, while the view fundamentally only has capacity for two concepts.
Yet the current state of play is to nevertheless appeal to the aleatoric-epistemic view, with different researchers using it in different ways.
A result of this conceptual overloading is to conflate quantities that ought to be recognised as distinct.
Far from just a matter of semantics, this is having a meaningful effect on the field's progress: methods are being designed and evaluated based on shaky foundations.

To establish a clearer perspective, we draw on powerful yet underappreciated ideas from decision theory \citep{dawid1998coherent,degroot1962uncertainty,neiswanger2022generalizing}.
Our starting point is a final decision of interest with an associated loss function.
Given this, uncertainty in predictive beliefs can be formalised as the subjective expected loss of acting Bayes-optimally under those beliefs; this generalises quantities like variance and Shannon entropy.
From there we show how reasoning about new data gives rise to a notion of expected uncertainty reduction, which we can use to identify a decomposition of uncertainty into irreducible and reducible components.
Then we clarify the connection between uncertainty, predictive performance and data dispersion, linking to classic decompositions from statistics and information theory.
Overall this provides a coherent synthesis of key quantities that researchers are interested in (\Cref{fig:decision_theoretic_view}).

Bridging this generalised perspective back to how aleatoric and epistemic uncertainty have been discussed in past work, we provide new insights on BALD, a popular information-theoretic objective for data acquisition \citep{gal2017deep,houlsby2011bayesian,lindley1956measure}.
In particular we highlight that it should be seen not as a direct measure of long-run reducible predictive uncertainty, as has been suggested in the past, but instead as an estimator that can be highly inaccurate.
Reconciling this with BALD's practical utility, we suggest it is often better understood as approximately measuring short-run reductions in parameter uncertainty.
It can therefore be useful, albeit still suboptimal in prediction-oriented settings \citep{bickfordsmith2023prediction,bickfordsmith2024making}.

Our work thus serves to inform future work in two key ways.
On the one hand it sheds light on the contradictions of the aleatoric-epistemic view and presents a coherent alternative perspective that allows clearer thinking about uncertainty in machine learning.
On the other hand it provides more direct practical insights.
It clarifies that what might have seemed like arbitrary choices for a decision-maker can instead be made by following well-defined logic: given some basic components and principles, it becomes clear how we should measure predictive uncertainty, predictive performance and data dispersion, and how we can identify good future training data.
It also highlights approximations that often have to be made in practice, revealing scope for suboptimal performance and therefore informing future methods research.
\section{Background}\label{sec:background}

The broad motivation of the aleatoric-epistemic view is to distinguish between different sources of uncertainty.
If a model's prediction is uncertain, we might want to know whether that prediction is fundamentally uncertain for the given model class or instead due to a lack of data.
This breakdown has clear utility in the context of seeking new data that will reduce predictive uncertainty \citep{bickfordsmith2023prediction,bickfordsmith2024making,mackay1992evidence,mackay1992information}.
But it is also relevant elsewhere: in model selection, for example, we might want to quantify a model's scope for improvement by forecasting how its predictions will change given more data
\citep{barbieri2004optimal,fong2020marginal,geisser1979predictive,kadane2004methods,laud1995predictive}.

Uncertainty that resolves in light of new data can be thought of as ``epistemic'' in the sense that data conveys knowledge.
Intuitively the corresponding irreducible uncertainty seems to be determined by not only the model class but also, among other things, an ``inherent'' level of uncertainty associated with the data source at hand, which is often thought of in terms of randomness or chance, hence the word ``aleatoric''.

While the concepts of aleatoric and epistemic uncertainty had previously been used in machine learning, for example by \citet{lawrence2012what} and \citet{senge2014reliable}, their popularity grew following work by \citet{gal2016uncertainty}, \citet{gal2017deep} and \citet{kendall2017what}.
The most widely used mathematical definitions of these ideas, which we will discuss in \Cref{sec:problem}, are the information-theoretic quantities used by \citet{gal2017deep}, building on earlier work on Bayesian experimental design \citep{lindley1956measure} and Bayesian active learning \citep{houlsby2011bayesian,mackay1992evidence,mackay1992information}.

A range of perspectives on aleatoric and epistemic uncertainty in machine learning have been put forward in recent years.
These include a discussion of sources of uncertainty in machine learning \citep{gruber2023sources}; a case against Shannon entropy as a measure of predictive uncertainty \citep{wimmer2023quantifying}; proposals for alternative information quantities \citep{schweighofer2023introducing,schweighofer2023quantification,schweighofer2025information}; and various other suggestions for how to define uncertainty, such as in terms of class-wise variance \citep{sale2023second,sale2024label}, credal sets \citep{hofman2024credal,sale2023volume}, distances between probability distributions \citep{sale2024second}, frequentist risk \citep{kotelevskii2022nonparametric,kotelevskii2025risk,lahlou2023deup} and proper scoring rules \citep{hofman2024proper}.
As we will show, our replacement for the aleatoric-epistemic view unifies and explains many of these ideas.
\section{Key concepts}\label{sec:concepts}

Our aim in this work is to formalise and link together quantities that have been associated with the ideas of aleatoric and epistemic uncertainty in past work.
In particular we look to identify a rigorous notion of predictive uncertainty and the extent to which it reduces as more data is observed, and also measures of predictive performance and statistical dispersion in data.
We start by highlighting some foundational concepts that will be used throughout our discussion.

\subsection{Reasoning should start with the decision of interest}\label{sec:concepts_decision}

We consider taking an action, $a \in \mathcal{A}$, under imperfect knowledge of a ground-truth variable, $z \in \mathcal{Z}$.
Here $z$ could for example be an output relating to a given input (if so, the input is left implicit in our notation) or a parameter in a model, and $a$ could be a direct prediction of $z$, with $\mathcal{A} = \mathcal{Z}$ for point prediction, or $\mathcal{A} = \mathcal{P}(\mathcal{Z})$ for probabilistic prediction.
We emphasise our choice to focus on this decision, in deliberate contrast with the more common starting point of learning a model from fixed data.
We want a notion of predictive uncertainty that is grounded in actions and their consequences, and we need to reason about different possible datasets to rigorously think about reductions in uncertainty.

\subsection{Actions induce losses that reflect preferences}\label{sec:concepts_loss}

We assume we can measure the consequences of taking action $a$ in light of a realisation of $z$ using a loss (or negative utility) function, $\ell: \mathcal{A} \times \mathcal{Z} \to \mathbb{R}$.
In principle the specification of $\ell$ follows directly from having preferences that satisfy basic axioms of rationality \citep{vonneumann1947theory}.
In practice it can be hard to know what $\ell$ should be; options for dealing with this include using an intrinsic loss or a random loss \citep{robert1996intrinsic,robert2007bayesian}.

\subsection{Subjective expected loss enables decision-making}\label{sec:concepts_expected_loss}

Since $\ell$ is a function of the unknown $z$, it cannot be used directly as an objective for selecting an action, $a$.
A principled solution that we focus on here is to form subjective beliefs over possible values of $z$ (conventionally this belief state would be a Bayesian prior or posterior), average over these to form an expected loss, then choose an action that minimises this subjective expected loss \citep{ramsey1926truth,savage1951theory}.
Alternative decision-making approaches include minimax, which involves minimising the worst-case frequentist risk \citep{vonneumann1928theorie,wald1939contributions,wald1945statistical}.

\subsection{Machine learning allows data-driven prediction}\label{sec:concepts_learning}

Minimising subjective expected loss requires beliefs over $z$, and those beliefs can often be informed by some training data, $y_{1:n} \sim p_\train(y_{1:n}|\pi)$, where $\pi \in \Pi$ is a policy that controls aspects of data generation.
We want notions of uncertainty that reflect how we will actually learn from data, rather than assuming idealised updating that we cannot perform in practice.
We therefore define our predictive model, $p_n(z)=p(z;y_{1:n})$, to be the output of a generic machine-learning method applied to the training data (and the input of interest if there is one) for any given $n$, which lets us reason about actual changes in uncertainty as $n$ varies.

Conventional Bayesian inference---taking a generative model over possible data and conditioning on the observed data, giving $p_n(z)=p(z|y_{1:n})$---is one possible updating method.
Others include deep learning \citep{lecun2015deep}, in-context learning \citep{brown2020language} and non-Bayesian ensemble methods \citep{breiman2001random}.
In some cases the predictive distribution is defined as $p_n(z) = \expectation{p_n(\theta)}{p_n(z|\theta)}$ where $\theta \sim p_n(\theta)=p(\theta;y_{1:n})$ represents a set of stochastic model parameters that we average over at prediction time.

Regardless of whether there are stochastic parameters, the updating scheme could be stochastic.
To handle this we take the convention that updating stochasticity is implicitly absorbed into $y_{1:n}$: we can consider our machine-learning method to be a deterministic mapping that takes a random-number seed as an auxiliary input along with data.
Thus, while stochastic updating can be source of variability in how uncertainty reduces, this can be dealt with as part of the variability already present in what data we observe.

\subsection{Bayes optimality is a subjective notion}\label{sec:concepts_bayes_optimality}

An action taken by minimising subjective expected loss under $p_n(z)$ is referred to as Bayes optimal \citep{murphy2022probabilistic}; if the action is an estimator of some quantity of interest then it is known as a Bayes estimator.
The notion of Bayes optimality assumes our beliefs, $p_n(z)$, represent our best knowledge of $z$, and $\ell$ reflects our preferences.
It says nothing at all about how well our beliefs match reality, or about the actions we would take if we had different beliefs.
Bayes-optimal actions can therefore be suboptimal as judged using realisations of $z$ from somewhere other than $p_n(z)$, such as a system serving as a source of ground truth.

\begin{figure*}[t]
    \centering
    \begin{adjustbox}{width=0.75\linewidth,keepaspectratio}
        \begin{tikzpicture}[
            boxstyle/.style={
                    draw=black!20,
                    rectangle,
                    rounded corners,
                    align=left,
                    inner sep=8pt,
                },
            font=\fontfamily{lmss}\selectfont,
        ]
            \node[boxstyle, text width=135pt] (aleatoric_1) {
                ``captures noise inherent in the observations''
                \\ \vspace{3pt}
                $\entropy{p_\train(y_{1:n}|\pi)}$ or $\entropy{p_{\eval}(z)}$
            };

            \node[boxstyle, text width=135pt, right=50pt of aleatoric_1.north east, anchor=north west] (aleatoric_2) {
                ``cannot be reduced even if more data were to be collected''
                \\ \vspace{3pt}
                $\entropy{p_\infty(z)}$
            };

            \node[boxstyle, text width=135pt, right=50pt of aleatoric_2.north east, anchor=north west] (aleatoric_3) {
                Expected parameter-conditional predictive entropy
                \\ \vspace{3pt}
                $\expectation{p_n(\theta)}{\entropy{p_n(z|\theta)}}$
            };

            \node[boxstyle, text width=135pt, below=45pt of aleatoric_1.south west, anchor=north west] (epistemic_1) {
                ``uncertainty in the model\\parameters''
                \\ \vspace{3pt}
                $\entropy{p_n(\theta)}$
            };

            \node[boxstyle, text width=135pt, right=50pt of epistemic_1.north east, anchor=north west] (epistemic_2) {
                ``can be explained away given enough data''
                \\ \vspace{3pt}
                $\entropy{p_n(z)} - \entropy{p_\infty(z)}$
            };

            \node[boxstyle, text width=135pt, right=50pt of epistemic_2.north east, anchor=north west] (epistemic_3) {
                Expected information gain in the model parameters
                \\ \vspace{3pt}
                $\entropy{p_n(z)} - \expectation{p_n(\theta)}{\entropy{p_n(z|\theta)}}$
            };

            \coordinate (aleatoric_topleft) at ($(aleatoric_1.north west)+(-8pt,25pt)$);
            \coordinate (aleatoric_bottomright) at ($(aleatoric_3.south east)+(8pt,-8pt)$);
            \draw[draw=black!20, rounded corners] (aleatoric_topleft) rectangle (aleatoric_bottomright);
            \node[below=5pt of aleatoric_topleft, anchor=north west, xshift=5pt] (aleatoric_title) {
                \textbf{Aleatoric uncertainty}
            };

            \coordinate (epistemic_topleft) at ($(epistemic_1.north west)+(-8pt,25pt)$);
            \coordinate (epistemic_bottomright) at ($(epistemic_3.south east)+(8pt,-8pt)$);
            \draw[draw=black!20, rounded corners] (epistemic_topleft) rectangle (epistemic_bottomright);
            \node[below=5pt of epistemic_topleft, anchor=north west, xshift=5pt] (epistemic_title) {
                \textbf{Epistemic uncertainty}
            };

            \node[font=\LARGE, right=25pt of aleatoric_1.east, anchor=center] (neq_aleatoric_1_2) {$\neq$};
            \node[font=\LARGE, right=25pt of aleatoric_2.east, anchor=center] (neq_aleatoric_2_3) {$\neq$};
            \node[font=\LARGE, right=25pt of epistemic_1.east, anchor=center] (neq_epistemic_1_2) {$\neq$};
            \node[font=\LARGE, right=25pt of epistemic_2.east, anchor=center] (neq_epistemic_2_3) {$\neq$};

        \end{tikzpicture}
    \end{adjustbox}
    \caption{
        A popular interpretation of aleatoric and epistemic uncertainty in machine learning attaches multiple mathematical quantities to each of the two concepts, rendering it incoherent and thus a likely source of conflations in the literature.
        Here we show quotations from \citet{kendall2017what} expressed mathematically as information-theoretic quantities, along with an interpretation of \Cref{eq:aleatoric_vs_epistemic} due to \citet{gal2016uncertainty} and \citet{gal2017deep}.
        Some of the quantities shown can coincide in particular cases but in the general case they are distinct.
    }
    \vspace{-4pt}
    \label{fig:aleatoric_epistemic_view}
\end{figure*}

\subsection{Predictions often do not match data generation}\label{sec:concepts_mismatch}

The correspondence between our predictions, $p_n(z)$, and the data-generating process, $p_\train(y_i|\pi(y_{<i}),y_{<i})$, can be weak.
One basic reason for this is that they might be defined over different event spaces.
We could for example have $y_i \not\in \mathcal{Z}$: perhaps we want to predict a coin's bias based on outcomes of coin tosses, or we want to predict a variable in one domain (eg, vision) based on data from another domain (eg, text).
Even if that is not the case, $p_n(z)$ is a reflection of assumptions and design decisions based on incomplete knowledge \citep{box1976science,kleijn2006misspecification}, and there is no general guarantee that it will match reality.

\subsection{Model evaluation relies on external grounding}\label{sec:concepts_evaluation}

Because we expect $p_n(z)$ to be imperfect, we often want to assess it against a ground-truth realisation of $z$ or a reference distribution, $p_\eval(z)$.
Commonly this distribution---which could for example represent a computer program, a human expert or a physical sensor---is used as source of evaluation data for computing an estimator of frequentist risk \citep{berger1985statistical}, such as the mean squared error.
Notably $p_\eval(z)$ could itself be imperfect \citep{fluri2023evaluating}, so designing and interpreting evaluations of this form requires care.
\section{Assessing a popular view}\label{sec:problem}

Having outlined intuitive descriptions of aleatoric and epistemic uncertainty and their motivation in \Cref{sec:background}, we turn to how they have been formalised in machine learning.
Aleatoric and epistemic uncertainty are often thought of as additive components of predictive uncertainty.
A popular way to formalise this for models with stochastic parameters, $\theta$, is to relate three information-theoretic quantities:
\begin{align}\label{eq:aleatoric_vs_epistemic}
    \underbrace{\mathrm{EIG}_\theta}_\mathrm{``epistemic"}
    =
    \,\,
    \underbrace{\entropy{p_n(z)}}_\mathrm{``total"}
    -
    \underbrace{\expectation{p_n(\theta)}{\entropy{p_n(z|\theta)}}}_\mathrm{``aleatoric"}
\end{align}
where $\mathrm{H}$ denotes Shannon entropy and $\mathrm{EIG}_\theta$, also known as the BALD score \citep{houlsby2011bayesian}, is the expected information gain in $\theta$ from observing $z$.
\citet{gal2016uncertainty} stated the ``$\text{total} = \text{aleatoric} + \text{epistemic}$'' relationship and the correspondence between $p_n(z)$ and total uncertainty, while \citet{gal2017deep} made the explicit link to \Cref{eq:aleatoric_vs_epistemic}, informed by \citet{houlsby2011bayesian}.
\citet{kendall2017what} discussed the aleatoric-epistemic view in the context of computer vision.

While that work successfully captured some of the intuitions from \Cref{sec:background}, we highlight that it also overloaded the ideas of aleatoric and epistemic uncertainty with multiple meanings, introducing a number of spurious associations (\Cref{fig:aleatoric_epistemic_view}).
The competing definitions of aleatoric uncertainty conflate $\entropy{p_\infty(z)}$, the entropy of $p_n(z)$ as $n \to \infty$ (this depends on the data-generating process, as we will discuss in \Cref{sec:decomposition}), with three separate quantities:

\begin{enumerate}[label=(\alph*)]
      \item
            $\entropy{p_\train(y_{1:n}|\pi)}$, the entropy in training-data generation.
            Issue: $p_\infty(z)$ represents subjective predictive beliefs that need not match $p_\train(y_{1:n}|\pi)$ (\Cref{sec:concepts_mismatch}).

      \item
            $\entropy{p_\eval(z)}$, the entropy in evaluation-data generation.
            Issue: $p_\infty(z)$ represents subjective predictive beliefs that need not match $p_\eval(z)$ (\Cref{sec:concepts_mismatch,sec:concepts_evaluation}).

      \item
            $\expectation{p_n(\theta)}{\entropy{p_n(z|\theta)}}$, the expected conditional predictive entropy.
            Issue: for finite $n$ the expected conditional predictive entropy is only an estimator of $\entropy{p_\infty(z)}$, and it can be highly inaccurate (\Cref{sec:information_quantities}).

\end{enumerate}

Meanwhile the multiple definitions of epistemic uncertainty mix up $\entropy{p_n(z)} - \entropy{p_\infty(z)}$, the predictive-entropy reduction from updating on infinite new data, with two quantities:

\begin{enumerate}[label=(\alph*)]
      \item
            $\entropy{p_n(\theta)}$, the entropy of the model's stochastic parameters.
            Issue: the mapping from parameters to predictions is typically not invertible, so $\entropy{p_n(\theta)}$ will not necessarily relate to the reduction in predictive entropy.

      \item
            $\entropy{p_n(z)} - \expectation{p_n(\theta)}{\entropy{p_n(z|\theta)}}$, the expected information gain in the model parameters.
            Issue: for finite $n$ this EIG is only an estimator of $\entropy{p_n(z)} - \entropy{p_\infty(z)}$, and the estimation error can be large (\Cref{sec:information_quantities}).
\end{enumerate}

Other sources of confusion in this aleatoric-epistemic view include an incorrect association between a model's subjective uncertainty and frequentist measures of performance, such as classification accuracy (Figure 2 in \citet{kendall2017what}), along with misleading implications about how a model's uncertainty will behave with varying $n$ (Figure 6.11-6.12 in \citet{gal2016uncertainty} and Table 3 in \citet{kendall2017what}) and varying distance from the training data (``Aleatoric uncertainty does not increase for out-of-data examples\ldots whereas epistemic uncertainty does'' in \citet{kendall2017what}).
\section{An alternative perspective}\label{sec:solution}

We now present a coherent, general synthesis of key ideas used in existing discussions of aleatoric and epistemic uncertainty (\Cref{fig:decision_theoretic_view}).
We begin by reasoning about the subjective expected loss of acting Bayes-optimally under a given belief state, which leads to a decision-grounded measure of predictive uncertainty.
By thinking about how that uncertainty will change in light of new data, we then identify a notion of expected uncertainty reduction, which we use to define a decomposition of uncertainty into irreducible and reducible components, and which also has direct practical relevance.
Then, shifting our focus to externally grounded evaluation, we highlight the distinction between uncertainty, predictive performance and data dispersion.
Finally we return to the BALD score discussed in \Cref{sec:problem}, providing insights on its utility as a data-acquisition objective.

\subsection{Predictive uncertainty can be derived from the final decision of interest and the associated loss}\label{sec:uncertainty}

We first deal with the question of how to measure predictive uncertainty, to which many different answers have been put forward (\Cref{sec:introduction,sec:background}).
Revisiting past work, we show that minimising subjective expected loss \citep{ramsey1926truth,savage1951theory} directly leads to a loss-grounded measure of uncertainty that reflects our preferences about model behaviour in the final decision of interest.
We thus clarify that a decision-maker does not face an arbitrary choice over uncertainty measures: if they specify a loss function based on their preferences, a rigorous uncertainty measure follows.

If $p_n(z)$ represents our beliefs over $z$ then we can identify the Bayes-optimal action, $a_n^*$, in our final decision of interest by minimising the expected loss under those beliefs:
\begin{align}\label{eq:optimal_action}
    a_n^* &= \arg\min\nolimits_{a\in\mathcal{A}} \expectation{p_n(z)}{\ell(a,z)}
    .
\end{align}
Now we can reason about the loss we expect (under our belief state) to incur by taking this Bayes-optimal action.
An important, underappreciated result is that this minimal expected loss provides a way to measure uncertainty in $p_n(z)$ \citep{dawid1998coherent,degroot1962uncertainty,neiswanger2022generalizing}:
\begin{align*}
    \uncertainty{p_n(z)} = \expectation{p_n(z)}{\ell(a_n^*,z)}
    .
\end{align*}
A crucial implication of this is that any two decision-makers should not necessarily use the same uncertainty measure, depending on their decisions of interest and loss functions.
One might use variance \citep{hastie2009elements} while the other uses entropy \citep{shannon1948mathematical}, as \Cref{ex:variance,ex:entropy} show.

\begin{example}[\citealp{dawid1998coherent}]\label{ex:variance}
    Point prediction with $\mathcal{A} = \mathcal{Z}$ and $\ell(a,z) = (a - z)^2$ corresponds to measuring uncertainty in our beliefs, $p_n(z)$, using variance.
\end{example}

\begin{proof}
    The optimal action is the mean of $p_n(z)$:
    \begin{align*}
        a_n^*
        &= \arg\min\nolimits_{a\in\mathcal{A}} \expectation{p_n(z)}{(a - z)^2}
        = \expectation{p_n(z)}{z}
        .
    \end{align*}
    The subjective expected loss of taking this action is the variance of $p_n(z)$:
    \begin{align*}
        \uncertainty{p_n(z)}
        &= \expectation{p_n(z)}{(\expectation{p_n(z)}{z} - z)^2}
        = \variance{p_n(z)}{z}
        .
        \tag*{\qedhere}
    \end{align*}
\end{proof}

\begin{example}[\citealp{dawid1998coherent}]\label{ex:entropy}
    Probabilistic prediction with $\mathcal{A} = \mathcal{P}(\mathcal{Z})$ and $\ell(a,z) = -\log a(z)$ corresponds to measuring uncertainty in our beliefs, $p_n(z)$, using entropy.
\end{example}

\begin{proof}
    The optimal action is $p_n(z)$:
    \begin{align*}
        a_n^*
        &= \arg\min\nolimits_{a\in\mathcal{A}} -\expectation{p_n(z)}{\log a(z)}
        = p_n(z)
        .
    \end{align*}
    The subjective expected loss of taking this action is the Shannon entropy of $p_n(z)$:
    \begin{align*}
        \uncertainty{p_n(z)}
        &= -\expectation{p_n(z)}{\log p_n(z)}
        = \entropy{p_n(z)}
        .
        \tag*{\qedhere}
    \end{align*}
\end{proof}

\subsection{Decomposing predictive uncertainty requires accounting for the data-generating process}\label{sec:decomposition}

Next, to formalise the popular idea of decomposing predictive uncertainty into irreducible and reducible components (\Cref{sec:background,sec:problem}), we reason about how predictive uncertainty changes in light of new data.
We show that, as long as we explicitly account for the data-gathering process, we can write down a notion of expected uncertainty reduction that is well defined for any method that maps from data to a predictive distribution (\Cref{sec:concepts_learning}).
From this we identify a rigorous irreducible-reducible decomposition.

Characterising the reducibility of uncertainty initially seems as simple as considering new data, $\ynew_{1:m}=y_{(n+1):(n+m)}$, and measuring the corresponding uncertainty reduction,
\begin{align*}
    \mathrm{UR}_z(\ynew_{1:m}) = \uncertainty{p_n(z)} - \uncertainty{p_{n+m}(z)}
    ,
\end{align*}
which we note could be negative.
But this uncertainty reduction depends on exactly what the new data is, and that in turn depends on the process by which the data is generated.
We therefore need to explicitly account for the data-generating process to produce a well-defined notion of reducibility.
Any stochasticity in model updating also has to be taken into account, but this can be achieved by absorbing the updating stochasticity into the definition of $\ynew_{1:m}$ (\Cref{sec:concepts_learning}).

Revisiting the data-generating process from \Cref{sec:concepts_learning}, we define the distribution over $\ynew_i$ to depend on decisions made by the data-acquisition policy, $\pi$, and on the previous data:
\vspace{-2pt}
\begin{align*}
    p_\train(\ynew_{1:m}|\pi) = \prod_{i=1}^m p_\train(\ynew_i|\pi(y_{<i}),y_{<i})
    .
\end{align*}
With this we can define the true expected uncertainty reduction (EUR) in $z$ under a given policy, $\pi$, as
\begin{align}\label{eq:true_eur}
    \mathrm{EUR}_z^\true(\pi,m)
    = \expectation{p_\train(\ynew_{1:m}|\pi)}{\mathrm{UR}_z(\ynew_{1:m})}
    .
\end{align}
This allows us to shift from thinking about a specific realisation of data to the range of possible data that might be generated.
Working from this EUR to an uncertainty decomposition, we consider the limit of $m \to \infty$:
\begin{align*}
    \underbrace{\mathrm{EUR}_z^\true(\pi,\infty)}_\mathrm{reducible}
    = \underbrace{\uncertainty{p_n(z)}}_\mathrm{total} - \underbrace{\expectation{p_\train(\ynew_{1:\infty}|\pi)}{\uncertainty{p_\infty(z)}}}_\mathrm{irreducible}
    .
\end{align*}
Thus we see that three components---a loss function, a machine-learning method mapping from data to a predictive distribution, and a data-acquisition policy---fully specify a rigorous measure of expected uncertainty reduction and an associated irreducible-reducible decomposition.
This contrasts with the decomposition in \Cref{eq:aleatoric_vs_epistemic}, which requires stochastic model parameters and exact Bayesian updating.

It is worth noting that in some restricted cases the dependency on the data-acquisition policy, $\pi$, of the infinite-data terms in this uncertainty decomposition can disappear.
For example, if we are in a supervised-learning setting where the policy's decisions concern which inputs to acquire labels for, and if we are using a well-specified Bayesian model and exact Bayesian updating, $p_\infty(z)$ should be independent of $\pi$ as long as $\pi$ produces dense samples across the input space \citep{kleijn2012bernstein}.
However, the requirements for this are very strict, with any model misspecification or error in belief updating reintroducing the dependency.

\subsection{Practical estimation of expected uncertainty reduction relies on approximations}\label{sec:practical_eur}

Now we turn to estimating expected uncertainty reduction (EUR) in practice.
The decomposition in \Cref{sec:decomposition} is well defined but the infinite-data quantities within it are typically not practically obtainable.
While this might seem problematic, we suggest the takeaway should in fact be to deemphasise the decomposition in the context of real-world machine learning, where we do not have infinite data.
There is more concrete value (eg, for data acquisition or model selection) in estimating the EUR in \Cref{eq:true_eur} for finite $m$, which relies on some important practical approximations.

Since we typically do not know the true data-generating process, $p_\train(\ynew_{1:m}|\pi)$, a core approximation is to use a model over new data, $p_n(\ynew_{1:m}|\pi')$, as a proxy.
On top of this we might also need to approximate how our beliefs over $z$ update when we obtain new data.
In principle the EUR is defined with respect to whichever updating scheme we are using, but in practice the true update can be too expensive to perform within an expectation over new data.
This can be addressed by using some $q_{n+m}(z)$ in place of $p_{n+m}(z)$.
A common approach is to assume a Bayesian belief update given $\ynew_{1:m}$, even if the true update is not Bayesian \citep{bickfordsmith2023prediction,bickfordsmith2024making,gal2017deep,kirsch2019batchbald,kirsch2023stochastic}.
It might be, for example, that we want to use a Bayesian model but cannot perform exact inference.
If the true update is Bayesian and can be performed exactly then this assumption is of course not an approximation.

Combining model-based data simulation with an approximate updating scheme, we can estimate the true EUR using
\begin{align*}
    \mathrm{EUR}^\mathrm{est}_z(\pi',m)
    = \uncertainty{p_n(z)} - \expectation{p_n(\ynew_{1:m}|\pi')}{\uncertainty{q_{n+m}(z)}}
    .
\end{align*}
The accuracy of this estimator depends on the mismatch between $p_n(\ynew_{1:m}|\pi')$ and $p_\train(\ynew_{1:m}|\pi)$ as well as the mismatch between $q_{n+m}(z)$ and $p_{n+m}(z)$, both of which are likely to be greater for larger $m$.
Estimation therefore requires careful tradeoffs to mitigate these mismatches.

The practical relevance of this becomes clearer upon appreciating that the EUR estimator generalises a number of existing data-acquisition objectives.
Under the exact-Bayesian-updating assumption it is equivalent to what has variously been called the ``expected value of [additional/sample] information'' \citep{bernardo1994bayesian,raiffa1961applied}, the ``expected $H_{\ell,\mathcal{A}}$-information gain'' \citep{neiswanger2022generalizing} and the ``expected decision utility gain'' \citep{huang2024amortized}.
From that objective we can then recover the expected information gain in $z$, which corresponds to the BALD score \citep{gal2017deep,houlsby2011bayesian} if $z=\theta$ represents a set of stochastic model parameters, or the expected predictive information gain \citep{bickfordsmith2023prediction} if $z=(x_*,y_*)$ represents a target input and its output.
Notably $\mathrm{EUR}^\mathrm{est}_z(\pi',m)$ is also closely related to the idea of the martingale posterior \citep{fong2023martingale} as $p_n(\ynew_{1:m}|\pi')$ and our updating scheme together imply a joint distribution from which a martingale posterior can be derived.

\begin{figure*}[t]
    \centering
    \includegraphics[width=\linewidth]{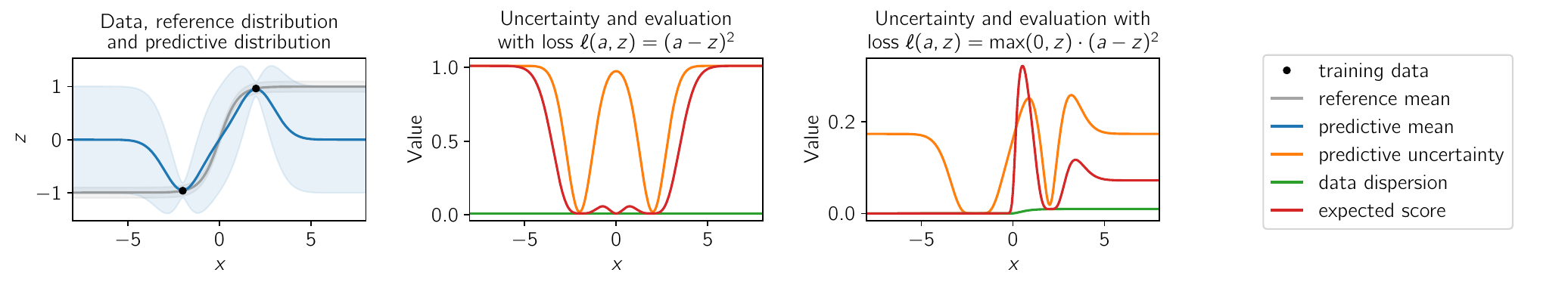}
    \vspace{-15pt}
    \caption{
        Model-based uncertainty, predictive performance and data dispersion are related but distinct quantities.
        Here we show a model's predictive distribution over an output, $z \in \mathbb{R}$, corresponding to an input, $x \in \mathbb{R}$; we also show a reference distribution serving as a source of evaluation data.
        Using the model alone, we compute a measure of predictive uncertainty using a loss function, $\ell(a,z)$, where $a$ is an action.
        This differs from the data dispersion, which describes the reference distribution.
        Connecting the two distributions, the expected score (lower is better) measures the predictive performance of the model as judged using data from the reference distribution.
    }
    \vspace{-7pt}
    \label{fig:uncertainty_and_evaluation}
\end{figure*}

\subsection{Model-based uncertainty should be used with care, and externally grounded evaluation is crucial}\label{sec:evaluation}

Next we clarify the relationship between the predictive uncertainty we have discussed so far and quantities commonly associated with it (\Cref{sec:background,sec:problem}): measures of predictive performance and data dispersion.
We identify how a model could be used to estimate those quantities but emphasise the limitations of that approach, underlining the key role to be played by externally grounded evaluation (\Cref{sec:concepts_evaluation}).

We consider assessing a predictive distribution, $p_n(z)$, either using a single ground-truth value of $z$ or using a reference distribution over $z$.
If we have a single $z$, we can evaluate $p_n(z)$ using a proper scoring rule \citep{savage1971elicitation} of the form
\begin{align*}
    s(p_n,z) = \ell(a^*_n, z)
\end{align*}
where $a^*_n$ is defined as in \Cref{eq:optimal_action} \citep{dawid1998coherent}.
This scoring rule measures the loss incurred by the Bayes-optimal action under $p_n(z)$ when the ground truth is $z$.

If we instead have a reference distribution, $p_\eval(z)$, we can evaluate $p_n(z)$ using a discrepancy function \citep{dawid1998coherent}:
\begin{align}
    d(p_n,p_\eval)
    &= \expectation{p_\eval(z)}{\ell(a^*_n, z) - \ell(a^*_\eval, z)}
    \label{eq:discrepancy_1}
    \\
    &= \underbrace{\expectation{p_\eval(z)}{s(p_n,z)}}_\mathrm{expected\ score} \,\, - \underbrace{\uncertainty{p_\eval(z)}}_\mathrm{data\ dispersion}
    \label{eq:discrepancy_2}
\end{align}
where $a^*_\eval$ is defined analogously to $a^*_n$ but with $p_\eval(z)$ as the predictive distribution.
\Cref{eq:discrepancy_1} highlights that the discrepancy is the expected excess loss from acting based on $p_n(z)$ rather than $p_\eval(z)$ when $p_\eval(z)$ represents the ground truth, which is linked to the idea of regret \citep{szepesvari2010algorithms}.
\Cref{eq:discrepancy_2} shows that the discrepancy also corresponds to the expected score of $p_n(z)$, which measures predictive performance (lower is better), minus the uncertainty measure from \Cref{sec:uncertainty} applied to $p_\eval(z)$, which measures the statistical dispersion in evaluation data drawn from the reference distribution.
This relationship generalises classic decompositions from statistics \citep{rice2007mathematical} and information theory \citep{cover2005elements}, as shown in \Cref{ex:bias_variance_decomp,ex:info_theoretic_decomp}.
Notably the expected score can be seen as a form of frequentist risk for an ``outer'' decision problem; another form also averages over the training data.

Because past work has sometimes drawn connections between model-based uncertainty, predictive performance and data dispersion, we now explain how such connections can come about and why in the general case they should not be taken to hold.
In particular, if we assume a specific model setup and estimation loss then we can derive Bayes estimators that generalise the marginal predictive entropy and the expected conditional predictive entropy in \Cref{eq:aleatoric_vs_epistemic}, but these assumptions will typically not apply and, even if they do apply, the estimators can be highly inaccurate.

\begin{proposition}[\citealp{berger1985statistical}]\label{thm:bayes_estimator}
    Let $F$ be a quantity of interest, and let $f(\theta)$ represent subjective beliefs over $F$, derived from a pushforward of a distribution on model parameters $\theta \sim p_n(\theta)$.
    Under a quadratic estimation loss, $\ell_\eta(\eta, \theta) = (\eta - f(\theta))^2$, the Bayes estimator of $F$ is $\eta^* = \expectation{p_n(\theta)}{f(\theta)}$.
\end{proposition}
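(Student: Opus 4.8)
The plan is to recognise that this is the abstract version of \Cref{ex:variance} with $f(\theta)$ playing the role of the unknown $z$ and $p_n(\theta)$ the belief state, so the Bayes estimator under a squared-error loss is just the posterior mean of $f(\theta)$. Concretely, the quantity to minimise is the subjective expected loss viewed as a function of the candidate estimator $\eta$,
\[
    g(\eta) = \expectation{p_n(\theta)}{(\eta - f(\theta))^2},
\]
which I would assume is finite by requiring $f(\theta)$ to have a finite second moment under $p_n(\theta)$ (otherwise the objective, and hence the estimator, is ill defined).

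I would then minimise $g$ over $\eta \in \mathbb{R}$ by either of two routes. The calculus route differentiates to obtain $g'(\eta) = 2\expectation{p_n(\theta)}{\eta - f(\theta)} = 2\bigl(\eta - \expectation{p_n(\theta)}{f(\theta)}\bigr)$, sets this to zero to get $\eta^* = \expectation{p_n(\theta)}{f(\theta)}$, and confirms a minimiser via $g''(\eta) = 2 > 0$. The algebraic route instead completes the square using the bias-variance identity
\[
    g(\eta) = \bigl(\eta - \expectation{p_n(\theta)}{f(\theta)}\bigr)^2 + \variance{p_n(\theta)}{f(\theta)},
\]
whose second term is constant in $\eta$ and whose first term attains its minimum value of zero exactly when $\eta = \expectation{p_n(\theta)}{f(\theta)}$. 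Either route gives $\eta^* = \expectation{p_n(\theta)}{f(\theta)}$, as claimed.

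There is essentially no hard step here: the result is the standard fact that squared-error loss is minimised by the mean, and it is structurally identical to \Cref{ex:variance} once the unknown is identified as $f(\theta)$. The only points requiring any care are the mild regularity conditions---finiteness of the second moment of $f(\theta)$, so that $g$ and its minimiser are well defined---and, for the calculus route, the legitimacy of differentiating under the expectation. Since the algebraic route sidesteps the latter entirely and makes the minimiser manifest without any interchange argument, I would present the proof via the bias-variance decomposition.
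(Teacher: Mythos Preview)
Your proposal is correct and is the standard argument: under squared-error loss the Bayes estimator is the posterior mean, established either by differentiating the expected loss or by the bias--variance decomposition. The paper itself does not give a proof of this proposition; it is stated as a known result attributed to \citet{berger1985statistical} and used as a lemma in the proofs of \Cref{thm:marginal_predictive_uncertainty,thm:conditional_predictive_uncertainty,thm:conditional_predictive_entropy,thm:param_eig_vs_asymptotic_predictive_ig,thm:param_eig_vs_true_param_eig}, so there is nothing in the paper to compare against beyond noting that your argument is exactly the textbook one the citation points to.
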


\begin{proposition}\label{thm:marginal_predictive_uncertainty}
    Assume $p_n(z)=\expectation{p_n(\theta)}{p_n(z|\theta)}$ is a model intended to directly approximate $p_\eval(z)$.
    Then the model's predictive uncertainty, $\uncertainty{p_n(z)}$, is a Bayes estimator of $\expectation{p_\eval(z)}{s(p_n,z)}$, the expected loss from acting Bayes-optimally under $p_n(z)$ when $z$ is in fact drawn from $p_\eval(z)$.
\end{proposition}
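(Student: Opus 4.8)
The plan is to recognise the expected score $\expectation{p_\eval(z)}{s(p_n,z)}$ as a quantity of interest that we cannot evaluate directly---since $p_\eval$ is unknown---and then identify the model's subjective belief over this quantity so that \Cref{thm:bayes_estimator} applies. First I would set the quantity of interest to be $F = \expectation{p_\eval(z)}{s(p_n,z)}$. Because $p_n(z)$ is intended to approximate $p_\eval(z)$, and the model's uncertainty about the data-generating distribution is carried by the parameter beliefs $p_n(\theta)$, the natural subjective belief is that, conditional on $\theta$, the reference distribution coincides with $p_n(z|\theta)$. Substituting $p_n(z|\theta)$ for $p_\eval(z)$ inside $F$ then gives the pushforward belief $f(\theta) = \expectation{p_n(z|\theta)}{s(p_n,z)}$.

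With $f(\theta)$ identified, \Cref{thm:bayes_estimator} applies immediately: under the quadratic estimation loss $\ell_\eta(\eta,\theta) = (\eta - f(\theta))^2$, the Bayes estimator of $F$ is $\eta^* = \expectation{p_n(\theta)}{f(\theta)}$. The remaining step is to simplify this. Since $s(p_n,z) = \ell(a_n^*,z)$ and $a_n^*$ is fixed once $p_n(z)$ is fixed---so it does not depend on $\theta$---I would write $\eta^* = \expectation{p_n(\theta)}{\expectation{p_n(z|\theta)}{\ell(a_n^*,z)}}$ and collapse the nested expectation using the assumption $p_n(z) = \expectation{p_n(\theta)}{p_n(z|\theta)}$, which yields $\eta^* = \expectation{p_n(z)}{\ell(a_n^*,z)} = \uncertainty{p_n(z)}$ by the definition of predictive uncertainty from \Cref{sec:uncertainty}.

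The main obstacle is conceptual rather than computational: it lies in correctly specifying $f(\theta)$, that is, in justifying that the model's subjective belief about the expected score, conditional on $\theta$, is obtained by replacing $p_\eval(z)$ with $p_n(z|\theta)$. This is precisely where the hypothesis that $p_n(z)$ is meant to approximate $p_\eval(z)$ does its work, and it is the step a careful reader will scrutinise most closely. Once this substitution is granted, the rest is a routine invocation of \Cref{thm:bayes_estimator} together with an application of the tower property for nested expectations, so I would keep those steps brief.
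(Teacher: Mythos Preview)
Your proposal is correct and follows exactly the same route as the paper: set $F=\expectation{p_\eval(z)}{s(p_n,z)}$, take $f(\theta)=\expectation{p_n(z|\theta)}{s(p_n,z)}$, and invoke \Cref{thm:bayes_estimator}. In fact you spell out more than the paper does, making explicit the tower-property simplification $\expectation{p_n(\theta)}{\expectation{p_n(z|\theta)}{\ell(a_n^*,z)}}=\expectation{p_n(z)}{\ell(a_n^*,z)}$ and the conceptual justification for replacing $p_\eval(z)$ by $p_n(z|\theta)$, both of which the paper leaves implicit.
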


\begin{proof}
    Applying \Cref{thm:bayes_estimator} with $F=\expectation{p_\eval(z)}{s(p_n,z)}$ and $f(\theta)$ = $\expectation{p_n(z|\theta)}{s(p_n,z)}$ gives $\eta^* = \uncertainty{p_n(z)}$.
\end{proof}

\begin{proposition}\label{thm:conditional_predictive_uncertainty}
    Assume the same model as in \Cref{thm:marginal_predictive_uncertainty}.
    Then the expected conditional predictive uncertainty, $\expectation{p_n(\theta)}{\uncertainty{p_n(z|\theta)}}$, is a Bayes estimator of $\uncertainty{p_\eval(z)}$, the dispersion in data drawn from a reference distribution.
\end{proposition}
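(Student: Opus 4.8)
The plan is to apply \Cref{thm:bayes_estimator} directly, mirroring the proof of \Cref{thm:marginal_predictive_uncertainty} but with a different choice of the quantity of interest and the parameter-conditional belief. First I would take the quantity of interest to be the data dispersion itself, $F = \uncertainty{p_\eval(z)}$, and identify the parameter-conditional belief about it as $f(\theta) = \uncertainty{p_n(z|\theta)}$. The crux is to argue that $f(\theta)$ is a legitimate pushforward of $p_n(\theta)$ representing subjective beliefs over $F$: under the standing assumption that $p_n(z) = \expectation{p_n(\theta)}{p_n(z|\theta)}$ is intended to directly approximate $p_\eval(z)$, each conditional $p_n(z|\theta)$ is the model's belief about the reference distribution given a parameter value $\theta$, so its dispersion $\uncertainty{p_n(z|\theta)}$ is exactly the model's belief about $\uncertainty{p_\eval(z)}$ conditional on $\theta$. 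Averaging these conditional beliefs over $p_n(\theta)$ then yields the model's marginal belief about $F$.

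With this identification in place, the remaining step is immediate. Applying \Cref{thm:bayes_estimator} under the quadratic estimation loss $\ell_\eta(\eta, \theta) = (\eta - f(\theta))^2$, the Bayes estimator of $F$ is the mean of $f(\theta)$ under $p_n(\theta)$,
\begin{align*}
    \eta^* = \expectation{p_n(\theta)}{f(\theta)} = \expectation{p_n(\theta)}{\uncertainty{p_n(z|\theta)}},
\end{align*}
which is precisely the expected conditional predictive uncertainty claimed.

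I expect the only real subtlety to be interpretive rather than computational. Unlike \Cref{thm:marginal_predictive_uncertainty}, where $f(\theta) = \expectation{p_n(z|\theta)}{s(p_n,z)}$ is built from the globally Bayes-optimal action $a_n^*$ and the tower property collapses the estimator back to $\uncertainty{p_n(z)}$, here $\uncertainty{p_n(z|\theta)}$ is formed using the action that is Bayes-optimal under the conditional $p_n(z|\theta)$ rather than under the marginal. No such collapse occurs, so the resulting estimator genuinely differs from $\uncertainty{p_n(z)}$ in general. The main work is therefore to justify cleanly that $\uncertainty{p_n(z|\theta)}$ plays the role of a parameter-conditional belief about the reference dispersion $\uncertainty{p_\eval(z)}$, which is exactly what the approximation assumption is there to license; once that is granted, \Cref{thm:bayes_estimator} does the rest.
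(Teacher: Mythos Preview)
Your proposal is correct and matches the paper's proof exactly: the paper also applies \Cref{thm:bayes_estimator} with $F=\uncertainty{p_\eval(z)}$ and $f(\theta)=\uncertainty{p_n(z|\theta)}$ to obtain $\eta^*=\expectation{p_n(\theta)}{\uncertainty{p_n(z|\theta)}}$. Your additional interpretive discussion and comparison with \Cref{thm:marginal_predictive_uncertainty} are sound but go beyond what the paper states.
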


\begin{proof}
    Applying \Cref{thm:bayes_estimator} with $F=\uncertainty{p_\eval(z)}$ and $f(\theta)$ = $\uncertainty{p_n(z|\theta)}$ gives $\eta^* = \expectation{p_n(\theta)}{\uncertainty{p_n(z|\theta)}}$.
\end{proof}

\vspace{4pt}

Three things are important to note in regard to \Cref{thm:marginal_predictive_uncertainty,thm:conditional_predictive_uncertainty}.
First, while they both suppose that $p_n(z)$ is designed to directly approximate $p_\eval(z)$, this need not always be the case.
Because $p_\eval(z)$ might itself be imperfect, our evaluation might just be serving to provide a rough signal of the model's predictive performance (\Cref{sec:concepts_evaluation}).
Second, they both assume a quadratic estimation loss, which might not reflect our preferences.
An alternative loss would result in different Bayes estimators.
For example, an absolute loss would lead us to use medians rather than expectations \citep{berger1985statistical}.
Third, they present estimators that are derived from subjective models and that, as a result, have no accuracy guarantee in the general case (\Cref{sec:concepts_bayes_optimality}).

\begin{figure*}[t]
    \centering
    \includegraphics[width=0.85\linewidth]{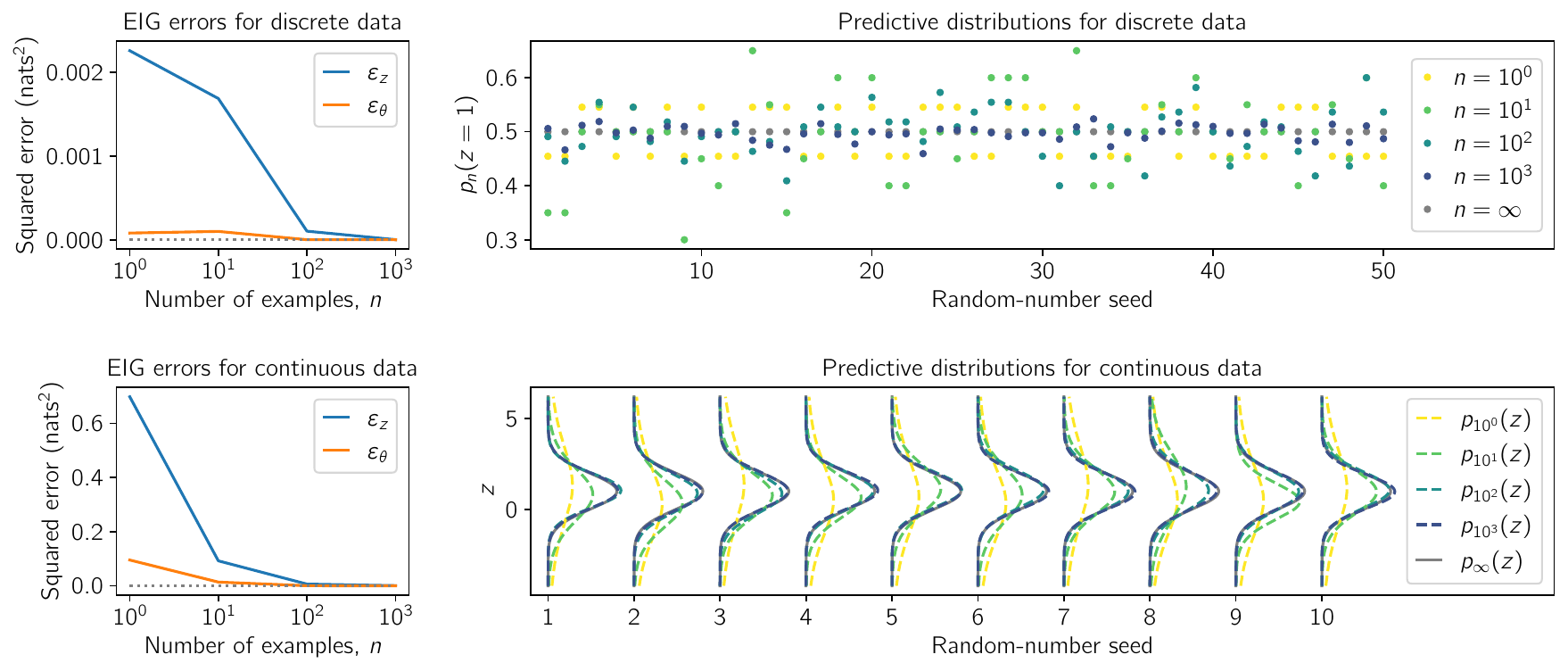}
    \caption{
        BALD's correspondence with infinite-step predictive information gain (a model's long-run reduction in predictive entropy) can be weak, such that it is often better thought of as an estimator of a ``true'' one-step expected information gain in the model parameters.
        Here we show the behaviour of conjugate models trained on discrete (top) and continuous data (bottom).
        For $n \in (1, 10, 100, 1000)$ we computed two estimation errors: $\varepsilon_z = (\mathrm{EIG}_\theta - \mathrm{IG}_z(\ynew_{1:\infty}))^2$ and $\varepsilon_\theta = (\mathrm{EIG}_\theta - \mathrm{EIG}_\theta^\true)^2$.
        We show these errors (left; mean over 50 random-number seeds) along with the evolution of the predictive distribution (right).
        Both estimation errors are due to inaccurately simulating future data, an issue that resolves as $n$ increases and the predictive distribution converges to the data-generating process.
    }
    \vspace{-4pt}
    \label{fig:bald_estimation_errors}
\end{figure*}

We therefore stress that model-based uncertainty should be considered separate from predictive performance and data dispersion, as shown in \Cref{fig:uncertainty_and_evaluation} (see \Cref{sec:implementation_details} for details).
Uncertainty alone is not a reliable indicator of whether we can trust a model.
Some kind of external grounding is crucial for well-informed practical deployment.

\subsection{Popular information-theoretic quantities are best understood as imperfect estimators}\label{sec:information_quantities}

Finally we return to the information-theoretic quantities in \Cref{eq:aleatoric_vs_epistemic}, which have been central to many existing discussions of aleatoric and epistemic uncertainty.
Principally we highlight that BALD (that is, $\mathrm{EIG}_\theta$, the expected information gain in a set of stochastic model parameters, $\theta$) can be understood as an estimator of two separate unknown quantities: the infinite-step information gain in the model predictions and a ``true'' one-step expected information gain in the model parameters.
We also suggest the relative magnitudes of the two corresponding estimation errors might help explain BALD's utility as a data-acquisition objective.

First we show that, under assumptions on the data and model that result in convergence to a single setting of $\theta$ \citep{doob1949application,freedman1963asymptotic,freedman1965asymptotic}, BALD can be understood as an estimator of the infinite-step predictive information gain,
\begin{align*}
    \mathrm{IG}_z(\ynew_{1:\infty})
    &= \entropy{p_n(z)} - \entropy{p_n(z|\ynew_{1:\infty})}
    ,
\end{align*}
measuring the reduction in the model's predictive entropy from a Bayesian update on infinite new data, $\ynew_{1:\infty}$.

\begin{proposition}\label{thm:conditional_predictive_entropy}
    Let $\ynew_{1:m}$ and $p_n(y|\theta)p_n(z|\theta)p_n(\theta)$ be a combination of data sequence and generative model that yield $p_n(\theta|\ynew_{1:m}) \to \delta_{\theta_\infty}(\theta)$ as $m\to\infty$.
    Then the expected conditional predictive entropy, $\expectation{p_n(\theta)}{\entropy{p_n(z|\theta)}}$, is a Bayes estimator of $\entropy{p_n(z|\ynew_{1:\infty})}$, the marginal predictive entropy after a Bayesian update on infinite new data, $\ynew_{1:\infty}$.
\end{proposition}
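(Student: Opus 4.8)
The plan is to invoke \Cref{thm:bayes_estimator} with the quantity of interest $F = \entropy{p_n(z|\ynew_{1:\infty})}$ and the parameter-conditional belief $f(\theta) = \entropy{p_n(z|\theta)}$, exactly as \Cref{thm:marginal_predictive_uncertainty,thm:conditional_predictive_uncertainty} were proved. The only real content is verifying that $f(\theta) = \entropy{p_n(z|\theta)}$ is the correct belief over $F$; once this identification is made, \Cref{thm:bayes_estimator} delivers $\eta^* = \expectation{p_n(\theta)}{\entropy{p_n(z|\theta)}}$ immediately.

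To establish the identification I would condition on $\theta$, treating it as the parameter generating the simulated future data $\ynew_{1:\infty} \sim \prod_i p_n(\ynew_i|\theta)$. Using the conditional-independence structure of the generative model $p_n(y|\theta)p_n(z|\theta)p_n(\theta)$, so that $p_n(z|\theta,\ynew_{1:m}) = p_n(z|\theta)$, the updated marginal predictive is $p_n(z|\ynew_{1:m}) = \expectation{p_n(\theta|\ynew_{1:m})}{p_n(z|\theta)}$. Letting $m\to\infty$ and applying the hypothesis $p_n(\theta|\ynew_{1:m}) \to \delta_{\theta_\infty}(\theta)$, the expectation collapses onto the point mass, so $p_n(z|\ynew_{1:\infty}) = p_n(z|\theta_\infty)$. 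The cited consistency results \citep{doob1949application,freedman1963asymptotic,freedman1965asymptotic} identify the concentration point with the generating parameter, $\theta_\infty = \theta$, whence $F \mid \theta = \entropy{p_n(z|\theta_\infty)} = \entropy{p_n(z|\theta)} = f(\theta)$. The belief distribution over $F$ is thus the pushforward of $p_n(\theta)$ through $\theta \mapsto \entropy{p_n(z|\theta)}$, matching the premise of \Cref{thm:bayes_estimator}.

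Applying \Cref{thm:bayes_estimator} then gives the Bayes estimator $\eta^* = \expectation{p_n(\theta)}{\entropy{p_n(z|\theta)}}$, the expected conditional predictive entropy, as claimed. I expect the main obstacle to be the identification $\theta_\infty = \theta$ in the second step: the concentration hypothesis alone only supplies \emph{some} limit $\theta_\infty$, and pinning it to the data-generating parameter relies on the asymptotic results holding in this setting, which in turn presumes a form of well-specification. If the simulation likelihood and the updating likelihood disagreed, the pushforward law of $\theta_\infty$ would no longer be $p_n(\theta)$ and the clean identification of $f(\theta)$ would fail.
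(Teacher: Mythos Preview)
Your proposal is correct and follows essentially the same approach as the paper: set $F = \entropy{p_n(z|\ynew_{1:\infty})}$ and $f(\theta) = \entropy{p_n(z|\theta)}$, then invoke \Cref{thm:bayes_estimator}. The paper's proof is terser, packaging your explicit posterior-consistency argument for $\theta_\infty = \theta$ into a single appeal to the martingale-posterior equivalence of \citet{fong2023martingale}, but the logic is the same.
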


\begin{proposition}\label{thm:param_eig_vs_asymptotic_predictive_ig}
    Assume the data and model from \Cref{thm:conditional_predictive_entropy}.
    Then the expected information gain in the model parameters, $\mathrm{EIG}_\theta$, from observing $z$ is a Bayes estimator of the infinite-step predictive information gain, $\mathrm{IG}_z(\ynew_{1:\infty})$.
\end{proposition}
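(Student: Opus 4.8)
The plan is to reduce this to \Cref{thm:conditional_predictive_entropy} together with the affine-equivariance of the quadratic-loss Bayes estimator established in \Cref{thm:bayes_estimator}. First I would write out the two quantities side by side,
\begin{align*}
    \mathrm{IG}_z(\ynew_{1:\infty}) &= \entropy{p_n(z)} - \entropy{p_n(z|\ynew_{1:\infty})}, \\
    \mathrm{EIG}_\theta &= \entropy{p_n(z)} - \expectation{p_n(\theta)}{\entropy{p_n(z|\theta)}},
\end{align*}
and observe that the leading term $\entropy{p_n(z)}$ is a fixed, data-independent constant: it depends only on the current predictive distribution, not on $\theta$ nor on the as-yet-unobserved $\ynew_{1:\infty}$. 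The two quantities therefore differ only in their second term.

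The core step is to invoke \Cref{thm:conditional_predictive_entropy}, which under the stated data and model assumptions already certifies that $\expectation{p_n(\theta)}{\entropy{p_n(z|\theta)}}$ is the Bayes estimator of $\entropy{p_n(z|\ynew_{1:\infty})}$. Concretely, that proposition supplies the fact I need: beliefs over the infinite-data conditional entropy are obtained as the pushforward of $p_n(\theta)$ through the map $\theta \mapsto \entropy{p_n(z|\theta)}$. This is exactly the $f(\theta)$ required to apply \Cref{thm:bayes_estimator}, and it is justified by the concentration $p_n(\theta|\ynew_{1:m}) \to \delta_{\theta_\infty}$, which makes $\entropy{p_n(z|\ynew_{1:\infty})} = \entropy{p_n(z|\theta_\infty)}$ with the limiting parameter distributed as $\theta_\infty \sim p_n(\theta)$.

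From here I would apply \Cref{thm:bayes_estimator} directly, taking $F = \mathrm{IG}_z(\ynew_{1:\infty})$ and $f(\theta) = \entropy{p_n(z)} - \entropy{p_n(z|\theta)}$. Since $f$ is an affine function of $\entropy{p_n(z|\theta)}$ with the deterministic offset $\entropy{p_n(z)}$, and since the quadratic-loss Bayes estimator is the posterior mean, which commutes with affine transformations, the Bayes estimator of $F$ is $\eta^* = \expectation{p_n(\theta)}{f(\theta)} = \entropy{p_n(z)} - \expectation{p_n(\theta)}{\entropy{p_n(z|\theta)}} = \mathrm{EIG}_\theta$, as required.

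The bulk of the difficulty sits inside the prerequisite \Cref{thm:conditional_predictive_entropy}, namely establishing that the limiting parameter is distributed as $p_n(\theta)$ and that the conditional predictive entropy converges appropriately; here I am entitled to assume that result. Given it, the only genuine content remaining is verifying that adding the deterministic constant $\entropy{p_n(z)}$ to both the target and its estimator preserves the Bayes-estimator relationship. This is a one-line consequence of linearity of expectation, so I expect no real obstacle beyond stating the affine-invariance step carefully.
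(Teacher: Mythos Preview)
Your proposal is correct and matches the paper's own proof: both write $\mathrm{IG}_z(\ynew_{1:\infty}) = \entropy{p_n(z)} - \entropy{p_n(z|\ynew_{1:\infty})}$, invoke \Cref{thm:conditional_predictive_entropy} to replace the second term by its Bayes estimator $\expectation{p_n(\theta)}{\entropy{p_n(z|\theta)}}$, and then combine with the known constant $\entropy{p_n(z)}$. Your explicit appeal to affine equivariance of the quadratic-loss Bayes estimator is a minor elaboration of what the paper leaves implicit, but the argument is otherwise identical.
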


\vspace{4pt}

In \Cref{fig:bald_estimation_errors} we demonstrate that the approximation $\mathrm{EIG}_\theta \approx \mathrm{IG}_z(\ynew_{1:\infty})$ from \Cref{thm:param_eig_vs_asymptotic_predictive_ig} can be coarse.
These results were produced with extremely simple setups within which we can perform exact inference and we are sure to recover the true data-generating process in the limit of infinite data (see \Cref{sec:implementation_details} for details).
We therefore know that the estimation error is due to a failure of the model to accurately simulate future data, which in turn is due to $n$ being finite.

This behaviour appears to align with existing results (with the caveat that past studies did not match the assumptions of \Cref{thm:conditional_predictive_entropy,thm:param_eig_vs_asymptotic_predictive_ig}).
Figure 2 in \citet{bickfordsmith2024making} and Figure 5 in \citet{wimmer2023quantifying} show small-$n$ estimates of $\mathrm{EIG}_\theta$ that differ substantially from the changes in predictive entropy that actually occurred in practice.
Those results even suggest it would have been more accurate to assume $\mathrm{IG}_z(\ynew_{1:\infty}) = \entropy{p_n(z)}$, with $\entropy{p_n(z|\ynew_{1:\infty})} = 0$, than to estimate it using $\mathrm{EIG}_\theta$.
Meanwhile \citet{mucsanyi2024benchmarking} and \citet{valdenegrotoro2022} emphasised the ``entanglement'' of aleatoric- and epistemic-uncertainty estimators, which can be understood as the estimators themselves having an ``epistemic'' component---or, in our terminology, being inaccurate finite-data estimators.

This raises the question of why BALD has proven practically useful as a data-acquisition objective in active learning \citep{gal2017deep,houlsby2011bayesian,osband2023epistemic}.
If our intuition is that BALD's utility stems from its correspondence with infinite-step predictive information gain (that is, a long-run reduction in a model's predictive entropy) and we consider setups in which the current predictive entropy is a better estimator than BALD, then we would expect that using the predictive entropy as a data-acquisition objective would lead to better predictive performance in active learning.
Yet this is not what we see in practice (\Cref{fig:bald_vs_entropy}).

We suggest another perspective on BALD might address this question.
In particular it can be understood (given assumptions on the data and model) as an estimator of the true one-step expected information gain in the model parameters,
\begin{align*}
    \mathrm{EIG}_\theta^\true
    &= \entropy{p_n(\theta)} - \expectation{p_\train(z)}{\entropy{p_n(\theta|z)}}
    ,
\end{align*}
where the expectation is over observations from the true data-generating process, not model-simulated observations.

\begin{figure}[t]
    \centering
    \includegraphics[width=0.88\linewidth]{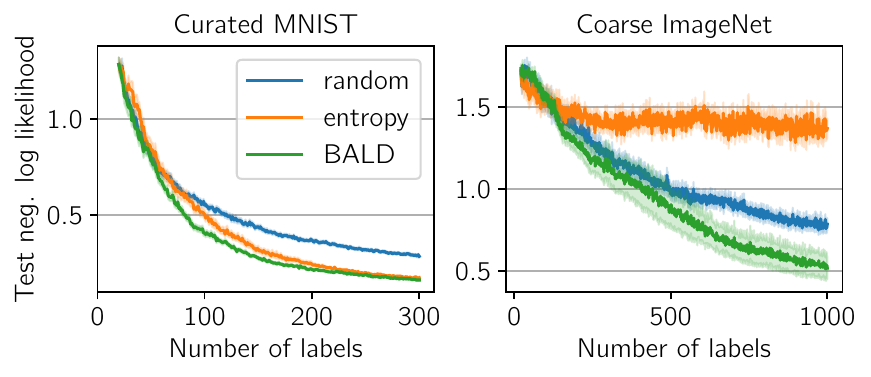}
    \caption{
        BALD outperforms predictive entropy as a data-acquisition objective in active learning, even though BALD tends to be a worse estimator of long-run predictive information gain in the setups studied.
        These results were produced using experimental setups described in \citet{bickfordsmith2023prediction,bickfordsmith2024making}.
    }
    \label{fig:bald_vs_entropy}
\end{figure}

\begin{proposition}\label{thm:param_eig_vs_true_param_eig}
    Let $z=y_{n+1}$.
    Assume $y_{1:n}$ are independent and identically distributed, with $y_i \sim p_\train(y)$, and assume $p_n(z)=\expectation{p_n(\theta)}{p_n(z|\theta)}$ is a model intended to directly approximate $p_\train(z)$.
    Then the expected information gain in the model parameters, $\mathrm{EIG}_\theta$, from observing $z$ is a Bayes estimator of the true one-step expected information gain, $\mathrm{EIG}_\theta^\true$, where the expectation is with respect to $p_\train(z)$.
\end{proposition}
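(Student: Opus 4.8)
The plan is to apply \Cref{thm:bayes_estimator} with the quantity of interest taken to be $F = \mathrm{EIG}_\theta^\true$. Both $\mathrm{EIG}_\theta$ and $\mathrm{EIG}_\theta^\true$ share the term $\entropy{p_n(\theta)}$, which is fixed by the current model and carries no uncertainty; the only thing that differs is the distribution over which the expected posterior entropy $\entropy{p_n(\theta|z)}$ is taken---the model-simulated $p_n(z)$ for $\mathrm{EIG}_\theta$ versus the unknown $p_\train(z)$ for $\mathrm{EIG}_\theta^\true$. The entire content of the proposition is thus about estimating $\expectation{p_\train(z)}{\entropy{p_n(\theta|z)}}$ using the model.

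First I would pin down the subjective beliefs over $F$. Because $p_n(z) = \expectation{p_n(\theta)}{p_n(z|\theta)}$ is intended to approximate $p_\train(z)$, and because the i.i.d.\ assumption makes $z = y_{n+1}$ share the training distribution, the model's belief about the data-generating process, conditional on a parameter value $\theta$, is $p_n(z|\theta)$. Pushing $\theta \sim p_n(\theta)$ through the map that substitutes $p_n(z|\theta)$ for $p_\train(z)$ in the definition of $\mathrm{EIG}_\theta^\true$ then yields the belief
\[
    f(\theta) = \entropy{p_n(\theta)} - \expectation{p_n(z|\theta)}{\entropy{p_n(\theta|z)}},
\]
which is exactly the pushforward quantity required by \Cref{thm:bayes_estimator}.

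Next I would invoke \Cref{thm:bayes_estimator}: under the quadratic estimation loss the Bayes estimator of $F$ is the prior mean $\eta^* = \expectation{p_n(\theta)}{f(\theta)}$. The remaining step is a short marginalization. Treating the double expectation as an integral against the joint $p_n(\theta)\,p_n(z|\theta)$ and noting that $\entropy{p_n(\theta|z)}$ depends on $z$ alone, I would collapse the inner integral using $p_n(z) = \expectation{p_n(\theta)}{p_n(z|\theta)}$ to obtain
\[
    \eta^* = \entropy{p_n(\theta)} - \expectation{p_n(z)}{\entropy{p_n(\theta|z)}} = \mathrm{EIG}_\theta,
\]
which is the symmetric (parameter-side) form of \Cref{eq:aleatoric_vs_epistemic}.

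The main obstacle is conceptual rather than computational: the crux is correctly setting up the pushforward, that is, recognising that the model's conditional belief about the true data distribution is $p_n(z|\theta)$, so that averaging the conditional posterior-entropy term over $\theta \sim p_n(\theta)$ reproduces precisely the model-simulated expectation in $\mathrm{EIG}_\theta$. Once this modelling assumption is expressed in the form demanded by \Cref{thm:bayes_estimator}, the marginalization identity closes the argument immediately.
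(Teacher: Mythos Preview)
Your proposal is correct and follows essentially the same route as the paper: both identify the model's conditional belief about $p_\train(z)$ as $p_n(z|\theta)$, apply \Cref{thm:bayes_estimator} with $f(\theta)$ built from $\expectation{p_n(z|\theta)}{\entropy{p_n(\theta|z)}}$, and then use the marginalisation $\expectation{p_n(\theta)}{\expectation{p_n(z|\theta)}{\cdot}}=\expectation{p_n(z)}{\cdot}$ to recover $\mathrm{EIG}_\theta$. The only cosmetic difference is that the paper first strips off the known constant $\entropy{p_n(\theta)}$ and applies \Cref{thm:bayes_estimator} to the remaining term alone, whereas you carry the constant through $f(\theta)$; under quadratic loss this is immaterial.
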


\vspace{4pt}

Returning to the same experimental setup as before, we find (\Cref{fig:bald_estimation_errors}) that the approximation $\mathrm{EIG}_\theta \approx \mathrm{EIG}_\theta^\true$ is more accurate than $\mathrm{EIG}_\theta \approx \mathrm{IG}_z(\ynew_{1:\infty})$.
In other words, BALD more closely tracks short-run changes in parameter uncertainty than it does long-run changes in predictive uncertainty.
We do not claim this is a general result that will hold in all settings, but it is consistent with BALD being useful as a data-acquisition objective.
The data-acquisition horizons in active learning are typically very short, so it is the short-run notion of information gain that matters, not the asymptotic notion.
And while targeting predictions rather than parameters can be even more effective \citep{bickfordsmith2023prediction,bickfordsmith2024making}, maximising short-run parameter information gain is still often preferable over random acquisition.

One takeaway here is that information-theoretic quantities should not be confused with the quantities they estimate.
Another is that expected information gain in a variable of interest is a well-motivated objective for data acquisition, assuming the action of interest is a probabilistic prediction of $z$ and we use a negative-log-likelihood loss function (\Cref{ex:entropy}), but it also crucially depends on the model's ability to simulate future data, and it assumes a Bayesian update that might not match the true update (\Cref{sec:practical_eur}).
\section{Conclusion}\label{sec:conclusion}

We have argued that the aleatoric-epistemic view on uncertainty does not serve machine-learning researchers' needs: its lack of expressive capacity has led to conceptual overloading and confusion.
To address this we have presented a decision-theoretic view that unifies many concepts of interest to researchers.
This provides clarity on five key points:

\begin{enumerate}[label=(\alph*)]
    \item
        Measures of predictive uncertainty need not be an arbitrary choice but can instead be derived from a decision of interest with an associated loss function.

    \item
        If we explicitly account for how training data is generated, we can identify a decomposition of uncertainty into reducible and irreducible components for any method that maps from data to a predictive distribution.

    \item
        In practice we can typically only produce an approximate notion of expected uncertainty reduction that relies on a proxy for the true data-generating process and possibly also an approximation of model updating.

    \item
        Predictive uncertainty should be assumed to be separate from measures of predictive performance and data dispersion, and externally grounded evaluation is therefore key for building trust in a model's predictions.

    \item
        BALD does not directly measure long-run reducible predictive uncertainty but rather estimates it, and the associated estimation error can be large.
\end{enumerate}

We believe our decision-theoretic view should be used in place of the predominant aleatoric-epistemic view as the field moves forward.
Our hope is that it will support more productive discourse and methodological development.

We also urge care in using uncertainty and related quantities in practice.
A crucial recurring point in this work is that true quantities of interest almost always have to be approximated in the real world.
Ignoring approximation errors can lead to dangerous assumptions and ineffective methods.

Finally we suggest that future methods research should be guided by practical utility in concrete decision problems, such as data acquisition and model selection, rather than more abstract notions of approximation accuracy.
Combined with stronger conceptual foundations, a renewed emphasis on real-world performance could accelerate progress in the use of probabilistic reasoning in machine learning.
\section*{Impact statement}

This paper presents work whose goal is to advance the field of machine learning.
There are many potential societal consequences of our work, none which we feel must be specifically highlighted here.
\section*{Acknowledgements}

We thank Philip Dawid for sharing a technical report with us.
We are also grateful to Andreas Kirsch, Mike Osborne and the anonymous reviewers of this paper for useful discussions and feedback.
Freddie Bickford Smith is supported by the EPSRC Centre for Doctoral Training in Autonomous Intelligent Machines and Systems (EP/L015897/1).
Tom Rainforth is supported by EPSRC grant EP/Y037200/1.

\bibliography{main.bib}

\appendix
\onecolumn

\clearpage
\section{Examples}

\begin{example}\label{ex:bias_variance_decomp}
    Evaluating $p_n(z)$ against a reference distribution, $p_\eval(z)$, with $\mathcal{A} = \mathcal{Z}$ and $\ell(a,z) = (a - z)^2$ corresponds to measuring the predictive performance of $p_n(z)$ using mean squared error, measuring the discrepancy between $p_n(z)$ and $p_\eval(z)$ using squared bias and measuring dispersion in $p_\eval(z)$ using variance.
\end{example}

\begin{proof}
    Starting from \Cref{eq:discrepancy_1} with optimal actions $a^*_n = \expectation{p_n(z)}{z} = \mu_n$ and $a_\eval^* = \expectation{p_\eval(z)}{z} = \mu_\eval$ from \Cref{ex:variance}, the discrepancy between $p_n(z)$ and $p_\eval(z)$ is
    \begin{align*}
        d(p_n,p_\eval)
        &= \expectation{p_\eval(z)}{(\mu_n - z)^2 - (\mu_\eval - z)^2}
        = (\mu_n - \mu_\eval)^2
        .
    \end{align*}
    Now starting from \Cref{eq:discrepancy_2}, it can also be written as
    \begin{align*}
        d(p_n,p_\eval)
        &= \expectation{p_\eval(z)}{(\mu_n - z)^2} - \variance{p_\eval(z)}{z}
        .
    \end{align*}
    Equating these two expressions for the discrepancy leads to a standard bias-variance decomposition:
    \begin{align*}
        \underbrace{\expectation{p_\eval(z)}{(\mu_n - z)^2}}_\mathrm{mean\ squared\ error} = \underbrace{(\mu_n - \mu_\eval)^2}_\mathrm{squared\ bias} + \underbrace{\variance{p_\eval(z)}{z}}_\mathrm{variance}
    \end{align*}
    where the mean squared error measures predictive performance and the variance measures data dispersion.
\end{proof}

\begin{example}\label{ex:info_theoretic_decomp}
    Evaluating $p_n(z)$ against a reference distribution, $p_\eval(z)$, with $\mathcal{A} = \mathcal{P}(\mathcal{Z})$ and $\ell(a,z) = -\log a(z)$ corresponds to measuring the predictive performance of $p_n(z)$ using cross entropy, measuring discrepancy between $p_n(z)$ and $p_\eval(z)$ using Kullback-Leibler divergence and measuring dispersion in $p_\eval(z)$ using Shannon entropy.
\end{example}

\begin{proof}
    Starting from \Cref{eq:discrepancy_1} with optimal actions $a^*_n = p_n(z)$ and $a_\eval^* = p_\eval(z)$ (\Cref{ex:entropy}), the discrepancy between $p_n(z)$ and $p_\eval(z)$ is
    \begin{align*}
        d(p_n,p_\eval)
        &= \expectation{p_\eval(z)}{-\log p_n(z) + \log p_\eval(z)}
        = \kldivergence{p_\eval(z)}{p_n(z)}
        .
    \end{align*}
    Now starting from \Cref{eq:discrepancy_2}, it can also be written as
    \begin{align*}
        d(p_n,p_\eval)
        &= -\expectation{p_\eval(z)}{\log p_n(z)} - \entropy{p_\eval(z)}
        = \crossentropy{p_\eval(z)}{p_n(z)} - \entropy{p_\eval(z)}
        .
    \end{align*}
    Equating these two expressions for the discrepancy leads to a standard information-theoretic decomposition:
    \begin{align*}
        \underbrace{\crossentropy{p_\eval(z)}{p_n(z)}}_\mathrm{cross\ entropy} = \underbrace{\kldivergence{p_\eval(z)}{p_n(z)}}_\mathrm{KL\ divergence} + \underbrace{\entropy{p_\eval(z)}}_\mathrm{entropy}
    \end{align*}
    where the cross entropy measures predictive performance and the entropy measures data dispersion.
\end{proof}
\clearpage
\section{Proofs of \Cref{thm:conditional_predictive_entropy,thm:param_eig_vs_asymptotic_predictive_ig,thm:param_eig_vs_true_param_eig}}\label{sec:proofs}

\begin{restated_proposition}
    Let $\ynew_{1:m}$ and $p_n(y|\theta)p_n(z|\theta)p_n(\theta)$ be a combination of data sequence and generative model that yield $p_n(\theta|\ynew_{1:m}) \to \delta_{\theta_\infty}(\theta)$ as $m\to\infty$.
    Then the expected conditional predictive entropy, $\expectation{p_n(\theta)}{\entropy{p_n(z|\theta)}}$, is a Bayes estimator of $\entropy{p_n(z|\ynew_{1:\infty})}$, the marginal predictive entropy after a Bayesian update on infinite new data, $\ynew_{1:\infty}$.
\end{restated_proposition}

\begin{proof}
    Since $\ynew_{1:\infty}$ recovers a single setting of $\theta$, reasoning about $\theta$ is equivalent to reasoning about $\ynew_{1:\infty}$, following the argument presented in \citet{fong2023martingale}.
    This allows us to apply \Cref{thm:bayes_estimator} with $F=\entropy{p_n(z|\ynew_{1:\infty})}$ and $f(\theta)=\entropy{p_n(z|\theta)}$, which gives $\eta^* = \expectation{p_n(\theta)}{\entropy{p_n(z|\theta)}}$.
\end{proof}

\begin{restated_proposition}
    Assume the data and model from \Cref{thm:conditional_predictive_entropy}.
    Then the expected information gain in the model parameters, $\mathrm{EIG}_\theta$, from observing $z$ is a Bayes estimator of the infinite-step predictive information gain, $\mathrm{IG}_z(\ynew_{1:\infty})$.
\end{restated_proposition}

\begin{proof}
    The information gain to be estimated, $\mathrm{IG}_z(\ynew_{1:\infty})$, is defined as the reduction in the model's predictive entropy from a Bayesian update on infinite new data, $\ynew_{1:\infty}$:
    \begin{align*}
        \mathrm{IG}_z(\ynew_{1:\infty})
        &= \entropy{p_n(z)} - \entropy{p_n(z|\ynew_{1:\infty})}
        .
    \end{align*}
    Combining the known $\entropy{p_n(z)}$ with the Bayes estimator of $\entropy{p_n(z|\ynew_{1:\infty})}$ from \Cref{thm:conditional_predictive_entropy} gives
    \begin{align*}
        \mathrm{EIG}_\theta
        &= \entropy{p_n(z)} - \expectation{p_n(\theta)}{\entropy{p_n(z|\theta)}}
    \end{align*}
    as a Bayes estimator of $\mathrm{IG}_z(\ynew_{1:\infty})$.
\end{proof}

\begin{restated_proposition}
    Let $z=y_{n+1}$.
    Assume $y_{1:n}$ are independent and identically distributed, with $y_i \sim p_\train(y)$, and assume $p_n(z)=\expectation{p_n(\theta)}{p_n(z|\theta)}$ is a model intended to directly approximate $p_\train(z)$.
    Then the expected information gain in the model parameters, $\mathrm{EIG}_\theta$, from observing $z$ is a Bayes estimator of the true one-step expected information gain, $\mathrm{EIG}_\theta^\true$, where the expectation is with respect to $p_\train(z)$.
\end{restated_proposition}

\begin{proof}
    The expected information gain to be estimated, $\mathrm{EIG}_\theta^\true$, is defined as the reduction in the model's parameter entropy from a Bayesian update on new data, $z$, where $z$ is drawn from $p_\train(z)$:
    \begin{align*}
        \mathrm{EIG}_\theta^\true
        &= \entropy{p_n(\theta)} - \expectation{p_\train(z)}{\entropy{p_n(\theta|z)}}
        .
    \end{align*}
    The second term here can be estimated by applying \Cref{thm:bayes_estimator} with $F=\expectation{p_\train(z)}{\entropy{p_n(\theta|z_{1:m})}}$ and $f(\theta)=\expectation{p_n(z|\theta)}{\entropy{p_n(\theta|z)}}$.
    The Bayes estimator that results from this, $\eta^* = \expectation{p_n(z)}{\entropy{p_n(\theta|z)}}$, can be combined with the known current entropy, $\entropy{p_n(\theta)}$, to produce
    \begin{align*}
        \mathrm{EIG}_\theta
        &= \entropy{p_n(\theta)} - \expectation{p_n(z)}{\entropy{p_n(\theta|z)}}
    \end{align*}
    as a Bayes estimator of $\mathrm{EIG}_\theta^\true$.
\end{proof}
\clearpage
\section{Implementation details}\label{sec:implementation_details}

Code to generate \Cref{fig:uncertainty_and_evaluation,fig:bald_estimation_errors} is available at \shorturl{github.com/fbickfordsmith/rethinking-aleatoric-epistemic}.

\subsection{\Cref{fig:uncertainty_and_evaluation}}

We consider predicting an output, $z \in \mathbb{R}$, corresponding to an input, $x \in \mathbb{R}$.
The training data, $y_{1:n}$, comprises $n=2$ input-label pairs: $y_1 = (-2, \tanh(-2))$ and $y_2 = (2, \tanh(2))$.
We use this to compute a Gaussian-process predictive posterior, $p_n(z|x) = p(z|x,y_{1:n})$, based on a generative model comprising a Gaussian likelihood function, $p(z|x,\theta) = \mathrm{Normal}(z|\theta(x),\sigma^2)$, where $\sigma=0.1$, and a Gaussian-process prior, $\theta \sim \mathrm{GP}(0, k)$, where $k(x,x') = \exp(-(x-x')^2 / 2)$.
We compare this with a reference distribution, $p_\eval(z|x) = \mathrm{Normal}(y|\tanh(x),\sigma^2)$.
Using $p_n(z|x)$ and $p_\eval(z|x)$, we compute three quantities for $x \in [-8, 8]$: the predictive uncertainty, $\uncertainty{p_n(z|x)}$; the data dispersion, $\uncertainty{p_\eval(z|x)}$; and the expected score, $\expectation{p_\eval(z|x)}{s(p_n,z)}$.
We do this for two loss functions: $\ell(a,z) = (a-z)^2$ and $\ell(a,z) = \max(0,z)\cdot (a-z)^2$.

\subsection{\Cref{fig:bald_estimation_errors}}

We consider two cases of predicting $z=y_{n+1}$: a discrete case and a continuous case.
In the discrete case we have $y \in \{0, 1\}$, data generated from $p_\mathrm{train}(y) = \mathrm{Bernoulli}(y|\eta=0.5)$, and the Bayesian generative model is
\begin{align*}
    p(y,\eta|\alpha,\beta)
    &=
    p(y|\eta)p(\eta|\alpha,\beta)
    \\
    p(y|\eta)
    &=
    \mathrm{Bernoulli}(y|\eta)
    \\
    p(\eta|\alpha,\beta)
    &=
    \mathrm{Beta}(\eta|\alpha,\beta).
\end{align*}
In the continuous case we have $y \in \mathbb{R}$, data generated from $p_\mathrm{train}(y) = \mathrm{Normal}(y|\mu=1,\sigma^2=1)$, and the Bayesian generative model is
\begin{align*}
    p(y,\mu,\lambda|\alpha,\beta,\kappa,m)
    &=
    p(y|\mu,\lambda)p(\mu|m,\kappa,\lambda)p(\lambda|\alpha,\beta)
    \\
    p(y|\mu,\lambda)
    &=
    \mathrm{Normal}(y|\mu,\lambda^{-1})
    \\
    p(\mu|m,\kappa,\lambda)
    &=
    \mathrm{Normal}(\mu|m,(\kappa\lambda)^{-1})
    \\
    p(\lambda|\alpha,\beta)
    &=
    \mathrm{Gamma}(\lambda|\alpha,\beta).
\end{align*}
In both cases we can compute exact Bayesian posteriors \citep{murphy2022probabilistic}, and there is some $n$ for which $p_n(z)=p_\mathrm{train}(z)$ \citep{doob1949application,freedman1963asymptotic,freedman1965asymptotic}.
For each case we sample four datasets, $y_{1:n}$, with $y_i \sim p_\mathrm{train}(y)$ and $n \in (1, 10, 100, 1000)$.
On each dataset we compute the Bayesian parameter posterior, $p_n(\theta) = p(\theta|y_{1:n})$, where $\theta=(\alpha,\beta)$ or $\theta=(\alpha,\beta,\kappa,m)$, and then compute two quadratic estimation errors.
The first is the error from approximating the ``true'' expected information gain in $\theta$ with the standard, model-based expected information gain in $\theta$:
\begin{align*}
    \sqrt{\varepsilon_\theta} &= \mathrm{EIG}_\theta - \mathrm{EIG}_\theta^\true
    \\
    &= (\entropy{p_n(\theta)} - \expectation{p_n(z)}{\entropy{p_n(\theta|z)}}) - (\entropy{p_n(\theta)} - \expectation{p_\train(z)}{\entropy{p_n(\theta|z)}})
    \\
    &= \expectation{p_\train(z)}{\entropy{p_n(\theta|z)}} - \expectation{p_n(z)}{\entropy{p_n(\theta|z)}}
    .
\end{align*}
The second is the error from approximating the infinite-step information gain in $z$ with the expected information gain in $\theta$:
\begin{align*}
    \sqrt{\varepsilon_z} &= \mathrm{EIG}_\theta - \mathrm{IG}_z(\ynew_{1:\infty})
    \\
    &= (\entropy{p_n(z)} - \expectation{p_n(\theta)}{\entropy{p_n(z|\theta)}}) - (\entropy{p_n(z)} - \entropy{p_\train(z)})
    \\
    &= \entropy{p_\train(z)} - \expectation{p_n(\theta)}{\entropy{p_n(z|\theta)}}
    .
\end{align*}
We average over 50 repeats with different random-number seeds.

\end{document}